\newcommand\BibTeX{{\rmfamily B\kern-.05em \textsc{i\kern-.025em b}\kern-.08em
T\kern-.1667em\lower.7ex\hbox{E}\kern-.125emX}}
\newcommand{\cost}{\mathrm{cost}}
\providecommand{\norm}[1]{\left\lVert#1\right\rVert}
\renewcommand{\paragraph}[1]{\medskip\noindent\textbf{#1}}
\newcommand{\carab}{\textsc{Mean-Coreset}}
\newcommand{\OPT}{\mathrm{OPT}}
\newcommand{\coreset}{\textsc{Mean-Coreset}}
\newcommand{\coresetb}{\textsc{Kabsch-Coreset}}
\newcommand{\ccoreset}{\textsc{Streaming-Coreset}}
\newcommand{\instream}{stream}
\newcommand{\br}[1]{\left\{#1\right\}}                            
\newcommand{\eps}{\varepsilon}
\newcommand{\each}{\emph{\bf each} }
\newcommand{\REAL}{\ensuremath{\mathbb{R}}}
\newtheorem{lemma}{Lemma}
\newtheorem{theorem}{Theorem}
\newtheorem{definition}[theorem]{Definition}
\begin{document}
\runninghead{Nasser et al.}

\title{Coresets for Kinematic Data: \\From Theorems to Real-Time Systems}

\author{Soliman Nasser\affilnum{1}, Ibrahim Jubran\affilnum{1} and Dan Feldman\affilnum{1}}

\affiliation{\affilnum{1}Robotics and Big Data Lab, University of Haifa, IL}

\corrauth{Soliman Nasser, 
Robotics and Big Data Lab,
University of Haifa, IL.}

\email{soliman.nasser7@gmail.com}

\begin{abstract}
A coreset (or core-set) of a dataset is its semantic compression with respect to a set of queries,
such that querying the (small) coreset \emph{provably} yields an approximate answer to querying the original (full) dataset. In the last decade, coresets provided breakthroughs in theoretical computer science for approximation algorithms, and more recently, in the machine learning community for learning ``Big data". However, we are not aware of real-time systems that compute coresets in a rate of dozens of frames per second.

In this paper we suggest a framework to turn theorems to such systems using coresets.

We begin with a proof of independent interest, that \emph{any} set of $n$ matrices in $\REAL^{d\times d}$ whose sum is $S$, has a positively weighted subset whose sum has the same center of mass (mean) and orientation (left+right singular vectors) as $S$, and consists of $O(dr)$ matrices (independent of $n$), where $r\leq d$ is the rank of $S$. 
We provide an algorithm that computes this (core) set in one pass over possibly infinite stream of matrices in $d^{O(1)}$ time per matrix insertion.

By maintaining such a coreset for kinematic (moving) set of $n$ points, we can run pose-estimation algorithms, such as Kabsch or PnP, on the small coresets, instead of the $n$ points, in real-time using weak devices,
while obtaining the same results. This enabled us to implement a low-cost ($<\$100$) IoT wireless system that tracks a toy (and harmless) quadcopter which guides guests to a desired room (in a hospital, mall, hotel, museum, etc.) with no help of additional human or remote controller.

See the supp. material for a video that demonstrates our tracking system, including this ``Guardian Angel" application.

We hope that our framework will encourage researchers outside the theoretical community to design and use coresets in future systems and papers. To this end, we provide extensive experimental results on both synthetic and real data, as well as a link to the open code of our system and algorithms.
\end{abstract}

\keywords{Core-sets, Pose Estimation, Caratheodory, Localization}

\maketitle

\begin{figure}[hbtp]
\centering
\includegraphics[scale=0.4]{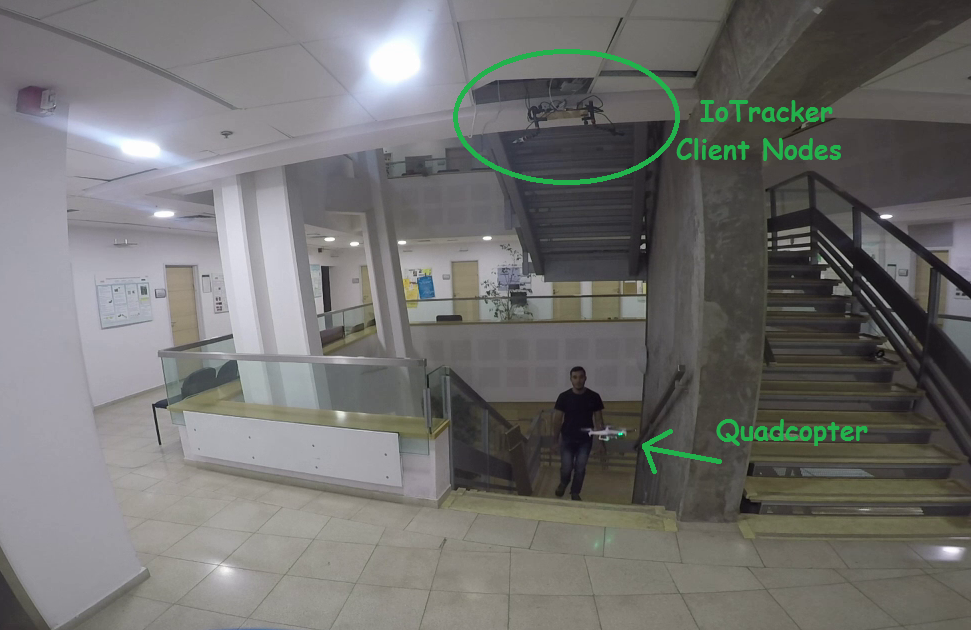}
\caption{"Guardian Angle" system. A safe and low-cost quadcopter autonomously leads a guest to its destination. Video: \url{https://vimeo.com/244656298} \label{ga-system}}
\end{figure}

\section{Introduction and Motivation}\label{sec:mot}
Coresets is a powerful technique for data reduction that was originally used to improve the running time of algorithms in computational geometry (e.g.~\cite{aga1,aga2,HP01,feldman2007ptas,sur,CzuSoh07a,Phillips16}. Later, coresets were designed for obtaining the first PTAS or LTAS (polynomial/linear time approximation schemes) for more classic and graph problems in theoretical computer science~\cite{FraSoh05,czumaj2005approximating,FrahlingIS08,BuriolFLS07}. More recently, coresets appear in machine learning  conferences~\cite{feldmanmik,feldman2011scalable,tsang2005core,lucic2016strong,bachem2016approximate,lucic2015tradeoffs,bachem2015coresets,huggins2016coresets,rosman2014coresets,reddi2015communication}.
with robotics~\cite{feldmanmik,sung2012trajectory,feldman2013idiary,rusprivate,rosman2014coresets,feldman2011scalable,bachem2015coresets,lucic2016strong,bachem2016approximate,reddi2015communication} and image~\cite{feigin2011high,feldman2013learning,alexandroni2016coresets} applications.

This paper has three goals:
(i) introduce coreset for the system community, and show how their theory can be applied in real-time systems, and not only in the context of machine learning or theoretical computer science.\\
(ii) suggest novel coresets for kinematic data, where the motivation is not to directly improve the running time of an algorithm, but to select a small subset of moving points that can be tracked and processed during the movement of the set in the next observed frames.\\
(iii) provide a wireless and low-cost tracking system, \textit{IoTracker}, based on mini-computers (``Internet of things") that is based on coresets.

To obtain goal (i) we suggest a simple but powerful and generic coreset that approximates the center of mass of a set of points, using a sparse distribution on a small subset of the input points. While this mean coreset has many other applications, to obtain goal (ii) we use it to design a novel coreset for pose-estimation based on the alignment between two paired sets. Computing the orientation of a moving robot or a rigid body is a fundamental question in SLAM (Simultaneous Localization And Mapping) and computer vision;see references in~\cite{stanway2015rotation}.

For example, we prove that the result of running the classic Kabsch algorithm that optimally solved this problem under natural assumptions, would yield the same result when applied on the coreset. This holds even after the input set (including its coreset) is translated and rotated in space, without the need of recomputing the coreset. We prove that the coreset has constant size (independent of the number of input tracked points), for every given input set.

Although we proved the correctness of the coreset for the Kabsch algorithm, by its properties we expect that it will hold for many other pose-estimation algorithms. Even when it is not, the coreset may be used in practice with any other pose estimation algorithm.

To demonstrate goal (iii) we install our tracking system in a university (and soon in a mall) and implement a ``Guardian angel" application, that to our knowledge implement for the first time fictional systems such as Skycall~\cite{skycall}: a safe and low-cost quadcopter leads a guest to its destination room based on pre-programmed routs, and the walking speed of the human. The main challenge was to control a sensors-less quadcopter in few dozens of frames per seconds using weak mini-computers. Unlike existing popular videos (e.g.~\cite{skycall}), in our video the quadcopter is autonomous in the sense that there is no hidden remote controller or another human in the loop, see \cite{coresetVideo17}.

This ``Guardian angel" was our main motivation and inspiration for designing the coreset in this paper.

\textbf{This paper is not }about suggesting the best algorithm for pose-estimation, the best tracking system, or about localization of quadcopters. As stated above, our goals are to show the process cycle from deep theorems in computational geometry, as the Caratehorody Theorem, to real-time and practical systems that use coresets. Nevertheless, we are not aware of similar coresets for pose estimation of kinematic data, or low-cost wireless tracking systems that can be used for hovering of a very unstable quadcopter in dozens of frames per second.

\label{sec:ourContrib}

\section{Related Work and Comparison\label{related}}

\paragraph{Pose Estimation. }
The pose estimation problem is also called the alignment problem, since given two paired point sets, $P$ and $Q$, the task is to find the Euclidean motion that brings $P$ into the best possible alignment with $Q$. We focus on the case where this alignment is the translation and rotation of $P$ that minimizes the sum of squared distances to the point of $Q$. For $|P| = |Q| = n$ points $\in \REAL^d$,  the optimal translation $\mu^*$ is simply the mean of $Q$ minus the means of $P$, each can be computed in $O(nd)$ time. Computing the optimal rotation $R^*$ (Wahba's Problem~\cite{wahba65}) can be obtained by the Kabsch algorithm~\cite{kabsch1976solution} in $O(nd^2)$ time; see Theorem~\ref{kabsch}

\paragraph{In the PnP problem} the observed set $Q$ is computed from a single camera, resulting in a set $Q$ of $n$ \emph{lines} which makes the problem NP-hard unlike the problems that are discussed in this paper. Indeed, exact solution for the PnP problem are known only for the case $n\leq 4$, and no provable approximation are known even for $n>4$ and sum of squared distances.
The Kabsch Coreset in this paper may be used to improve the running time of common PnP heuristics by running them on the coreset. Unlike their usage for Kabsch Algorithm, the theoretical guarantees of the coreset would no longer hold.

Sort of coreset of 4 point for PnP was suggested in~\cite{lepetit2009epnp}. However this set is not a subset of the input and unlike our Kabsch coreset, running PnP on the coreset would necessarily yield an approximated solution for the original set.

\paragraph{ICP.} In the previous paragraphs we assumed that the matching between $P$ and $Q$ is given.
The standard and popular solution for solving the matching and pose-estimation problems is called Iterative
Closest Point (ICP) proposed by Besl and McKay~\cite{besl1992method}; see~\cite{wang2015comparisons} and references therein. 
Variations and speed-ups can be found in~\cite{friedman1977algorithm, zhang1994iterative, pulli2000surface}

\paragraph{Faster and robust matching using coresets.} Our Kabsch coreset as the Kabsch algorithm, assumes that the matching between the points in the registered and observed frame is given. Matching is a much harder problem than e.g. the Kabsch algorithm (that can be solved in $O(n)$ time) in the sense that we have $n!$ permutations. Nevertheless, mean coreset can reduce both the running time and the robustness of the matching process. For example, in ICP, each point in $P\subseteq\REAL^3$ is assigned to its nearest neighbour (NN) in $Q$ which take $O(|P|\cdot |Q|)$ time. Using our Kabsch coreset for $P$ the running time reduced to $O(|Q|)$. This also implies that NN matching can be replaced in existing applications by a slower but better algorithm (e.g. cost-flow~\cite{ahuja1993network}) that will run on the small coreset. This will improve the matching step of ICP, without increasing the existing running times. Such an improvement is relevant even for non-kinematic (single) pair $P$ and $Q$ of points.

\textbf{Table ~\ref{table-label}} concludes the time comparison of solving each step of the localization problem with/without using coresets. 
The first row of the table represents the case where the matching has already been computed, and what is left to compute is the optimal rotation between the two sets of points.
The second row represents step (2) of the localization problem, where the matching needs to be computed given the rotation. In this case, a perfect matching between a set of size $k$ to a set of size $m$, can be achieved, according to \cite{bipmatching}, in $O(\sqrt{m+k}mk\log(m+k)$ time. Without using a coreset, the size of both sets is $n$. When using a coreset, the size of $P$ is reduced to $rd$, although the size of $Q$ remains $n$.
The last row represents a case where we need to compute the matching between two sets of points and the correct alignment is not given. In this case there are $n!$ possible permutations of the original set, each with its own optimal rotation. Using the coreset, the number of permutations reduces to roughly $(rd)!$ since it suffices to match correctly only the coreset points.

\begin{table}[]
\centering
\caption{\small Time comparison. All the numbers written in the table are in $O$ notation and represent time complexity. See more details in supplementary material.}

\label{table-label}
\begin{tabular}{c|c|c|}
\cline{2-3}
                                                                                                & \begin{tabular}[c]{@{}c@{}}Without coreset\\ $|P| = n , |Q| = n$\end{tabular} & \begin{tabular}[c]{@{}c@{}}Using coreset\\ $|P| = rd$\end{tabular}    \\ \hline
\multicolumn{1}{|c|}{\begin{tabular}[c]{@{}c@{}}With matching,\\ without rotation\end{tabular}} & \begin{tabular}[c]{@{}c@{}} \\ $nd^2$\end{tabular} & \begin{tabular}[c]{@{}c@{}}$|Q| = rd$\\ $d^3r$\end{tabular}           \\ \hline
\multicolumn{1}{|c|}{\begin{tabular}[c]{@{}c@{}}Without matching,\\ with rotation\end{tabular}} & \begin{tabular}[c]{@{}c@{}} \\ $n^{2.5}\log(n)$\end{tabular}  & \begin{tabular}[c]{@{}c@{}}$|Q| = n$\\$n^{1.5}dr\log(n)$\end{tabular} \\ \hline
\multicolumn{1}{|c|}{Noisy matching}                                                            & \begin{tabular}[c]{@{}c@{}} \\
$nd^2\cdot (n!)$\end{tabular}   & \begin{tabular}[c]{@{}c@{}}$|Q| = rd$\\ $(dr)!$\end{tabular}           \\ \hline
\end{tabular}
\end{table}

\paragraph{Relation to other coresets.} Unlike existing coresets, the coresets in this paper are exact and have no approximation error $\eps$.
This allows us to obtain the optimal solution for the problem. In other coresets, even for the related SVD problem, it is not clear how the approximation error will affect the output rotation matrix that is returned by the Kabsch algorithm. Since our mean coreset does not introduce any error, it can be used in any applications that aims to compute any functions $f(AA^T)=f(\sum_i a_ia_i^T)$, since it preserves the sum $\sum_i a_ia_i^T$.

An exception is the coreset $S'=DV^T$ for a matrix $A=UDV^T$, where $UDV^T$ is the SVD of $A$, or $S'=QR$ where $QR$ is the $QR$ decomposition of $A$ (Gram–Schmidt). In both cases, $\norm{Ax}_2=\norm{S'x}$ for every $x\in\REAL^d$ and there is no approximation error. However, in this case the rows of $S'$ are not a scaled subset of the input rows. Besides numerical and interpretation issues, we cannot use this coreset for kinematic data since we do not have a subset of markers to track over time or between frames.

\textbf{Coreset for sum of $1$-rank positive definite matrices }of size $O(d/\eps^2)$ were described e.g. in~\cite{summ}; see references therein. Our mean coreset is larger but implies such an exact result, and is more general (sum of any $d\times d$ matrices).

\section{Warm up: Mean Coreset\label{MeanCoreset}}
Given a set $P$ of $n$ points ($d$-dimensional vectors), our basic suggested tool is a small weighted subset of $P$, that we call \emph{mean coreset}, whose weighted mean is \emph{exactly} the same as the mean of the original set. In general, we can simply take the mean of $P$ as a coreset of size $1$. However, we require that the coreset will be a \emph{subset} of the input set $P$. Moreover, we require that the vector of the multiplicative weights will be a sparse distribution over $P$, i.e., a positive vector with an average entry of $1$. There are at least three reasons for using this coreset definition in practice, especially for real-time kinematic/tracking systems:\\
\textbf{$(i)$ Numerical stability: } Every $d$ linearly independent points in $P$ span their mean. However, this coreset yields huge positive and negative coefficients that cancelled each other and resulted in high numerical error. Our requirement that the coreset weights will have positive weights whose average is $1$ -- make these phenomena disappear in practice.
 \\
 \textbf{$(ii)$ Efficiency: }A small coreset allows us to compute the mean of a kinematic (moving) set of points faster, by computing the mean of the small coreset in each frame, instead of the complete set of points. This also reduces the time and probability of failure of other tasks such as matching points between frames. This is explained in Section~\ref{sec:ourContrib}.
 \\
 \textbf{$(iii)$ Kinematic Tracking: } In the next sections we track the orientation of an object (robot or a set of vectors), by tracking a kinematic representative set (coreset) of markers on it over time. This is impossible when the coreset is not a subset of input points.

\begin{definition}[\label{cordef}Mean coreset]
A \emph{distribution vector} $u=(u_1,\cdots,u_n)$ is a vector whose entries are non-negative and sum to one.
A \emph{weighted set} is a pair $(P,u)$ where $P=\br{p_1,\cdots,p_n}$ is an ordered set in $\REAL^d$, and $u$ is a distribution vector of length $|P|$.

A weighted set $(S,w)$ is a \emph{mean coreset} for the weighted set $(P,u)$, if $S\subseteq P$ and their weighted mean is the same, i.e.,
\[
\sum_{i=1}^n u_i p_i=\sum_{j=1}^{|S|} w_js_j,
\]
where $S=\br{s_1,\cdots,s_{|S|}}$. The \emph{cardinality} of the mean coreset $(S,w)$ is $|S|$.
\end{definition}

Of course $P$ is a trivial coreset of $P$. However, the coreset $S$ is efficient if its size $|S|=\big|\br{i\mid w_i>0}\big|$ is much smaller than $|P|=n$. This is related to the Caratheodory Theorem~\cite{cara} from computational geometry, that states that any convex combination of a set $P$ of points (in particular, its mean) is a convex combination of at most $d+1$ points in $P$.

We first suggest an inefficient construction in Algorithm~\ref{corealg} to obtain a mean coreset of only $d+1$ points, i.e., independent of $n$, for a set of $n$ points. This is based on the proof of the Caratheodory Theorem which we give for completeness, and takes $O(n^2d^2)$ time, which is impractical for the applications in this paper.

\textbf{Overview of Algorithm~\ref{corealg} and its correctness.}
The input is a weighted set $(P,w)$ whose points are denoted by $P=\br{p_1,\cdots,p_n}$. We assume $n>d+1$, otherwise $(S,u)=(P,w)$ is the desired coreset. Hence, the $n-1>d$ points $p_2-p_1$, $p_3-p_1,p_4-p_1,\ldots$ must be linearly dependent. This implies that there are reals $v_2,\cdots,v_n$, which are not all zeros, such that
\begin{equation}\label{eq00}
\sum_{i=2}^n v_i (p_i-p_1)=0.
\end{equation}
These reals are computed in Line~\ref{l11} by solving system of linear equations. This step dominates the running time of the algorithm and takes $O(nd^2)$ time using SVD. The definition
\begin{equation}\label{uudef}
v_1=-\sum_{i=2}^n v_i
\end{equation}
in Line~\ref{u1}, guarantees that
\begin{equation}\label{eqq}
\begin{split}
&\sum_{i=1}^n v_i p_i=v_1p_1+\sum_{i=2}^n v_i p_i=-\sum_{i=2}^n v_i p_1+\sum_{i=2}^n v_i (p_i-p_1)\\
&=\sum_{i=2}^n v_i (p_i-p_1)=0,
\end{split}
\end{equation}
where the second equality is by~\eqref{uudef}, and the last is by~\eqref{eq00}.
Hence, for every $\alpha\in\REAL$, the weighted mean of $P$ is
\begin{equation}\label{sum}
\sum_{i=1}^n u_ip_i=\sum_{i=1}^n u_ip_i+\alpha\sum_{i=1}^n v_i p_i=  \sum_{i=1}^n \left(u_i-\alpha v_i\right) p_i.
\end{equation}
The definition of $\alpha$ in Line~\ref{alp} guarantees that $\alpha v_i=u_i$ for some $i^*\in[n]$, and that $\alpha\leq 1$ for every $i\in[n]$. Hence, the set $S$ that is defined in Line~\ref{Sdef} contains at most $n-1$ points, and its set of weights $\br{u_i-\alpha v_i}$ is non-negative. The sum of weights is
\[
\sum_{i=1}^n (u_i-\alpha v_i)=\sum_{i=1}^nu_i-\alpha\cdot \sum_{i=1}^n v_i=1,
\]
where the last equality hold by~\eqref{uudef} and since $u$ is a distribution vector. This and~\eqref{sum} proves that $S$ is a mean coreset as in Definition~\ref{cordef} of size $n-1$. In Line~\ref{eight} we repeat this process recursively until there are only $d+1$ points left in $S$. For $O(n)$ iterations the overall time is thus $O(n^2d^2)$.

\begin{algorithm}[th]
\DontPrintSemicolon
{\begin{tabbing}
\textbf{Input:\quad} \= A weighted set $(P,u)$ of $n$ points in $\REAL^{d}$.\\
\textbf{Output:} \> A mean coreset $(S,w)$ for $P$ \\\>of cardinality $|S|\leq d+1$; see Definition~\ref{cordef}.
\end{tabbing}}

\If{$|P|\leq d+1$}
{\Return $(P,u)$}
{\For {$\each$ $i\in\br{2,\cdots,n}$}{
Set $a_i\gets p_i - p_1$
}
Set $A$ to be the $d\times (n-1)$ matrix $[ a_2 | \cdots | a_{n}]$.\\
Compute $v=(v_2,\cdots,v_{n})^T\neq 0$ such that $Av=0$. \label{l11}\\
Set $\displaystyle v_1 \gets -\sum_{i=2}^{n}  v_i$\label{u1}\\
Set $\displaystyle \alpha \gets \min\br{\frac{u_i}{v_i } \mid i\in [n] \text{ and }  v_i> 0}$ \label{alp}\\

Set $w_i\gets (u_i-\alpha v_i)$ for every $i\in[n]$ s.t. $w_i>0$. \label{Sdef}\\
Set $S\gets \br{p_i\mid w_i>0 \text{ and } i\in[n]}$\label{Sdeff}\\
\If {$|S|>d+1$ \label{eight}}{
Set $(S,w)\gets \carab(S,w)$}
}\Return $(S,w)$
\caption{$\carab(P,u)$}\label{corealg}
\end{algorithm}

The correctness of the following lemma follows mainly by the Caratheodory Theorem~\cite{cara} from computational geometry.
\begin{lemma}\label{lemma}
Let $P=\br{p_1,\cdots,p_n}$ be a set of $n>d+1$ points in $\REAL^d$.
Let $S$ be the output of a call to $\coreset(P)$; see Algorithm~\ref{corealg}. Then $S$ is a mean coreset of $P$.
This takes $(n-(d+1))\cdot O(nd^2)=O(n^2d^2)$ time.
\end{lemma}

\newcommand{\algstr}{\textsc{Streaming-Coresets}}
\newcommand{\coresetsize}{CoresetSize}
\begin{algorithm}[th]
\DontPrintSemicolon
{\begin{tabbing}
\textbf{Input:\quad} \=A (possibly infinite) $\instream$ of points in $\REAL^{d}$.\\
\textbf{Output:} \>A mean coreset $(S,w)$ of cardinality $|S|\leq d+1$ \\\>for the first $n$ points in $\instream$, for every $n\geq1$.
\end{tabbing}}
Set $S\gets\emptyset$\label{A1}\\
\While{ $\instream$ is not empty\label{A2}}{
    Set $n\gets n+1$\label{A3}\\
 $q\gets$ read the $n$th point from $\instream$\label{A4}\\
    Set $P\gets S\cup \br{q}$ \\\tcc{$S$ is the first $|S|$ points in $P$}\label{l5}\label{A5}
    Set $u_i\gets\label{A6}
    \begin{cases}
       \frac{w_i(n-1)}{n} & 1\leq i \leq |S|\\
       \frac{1}{n} &   i= |S|+1
    \end{cases}$\\
    $(S,w)\gets \carab(P,u)$\label{A7}\\
    \textbf{Output} $(S,w)$}\label{l8}\label{A8}
\caption{$\ccoreset(\instream)$\label{strcaraalg}}
\end{algorithm}

We then use the fact that our mean coresets are composable ~\cite{indyk2014composable,mirrokni2015randomized,aghamolaei2015diversity}: a union of coresets can be merged and reduced again recursively. To reduce the running time of Algorithm~\ref{corealg} we thus run it only on the first $d+2$ points of $P$, reduce it to a coreset of $d+1$ points in $O(d^3)$ time using a single iteration, and repeat for each of the remaining points.

\textbf{Overview of Algorithm~\ref{strcaraalg} and its correctness.}
We denote $[n]=\br{1,\cdots,n}$ for every integer $n\geq1$. In Line~\ref{A1} we initialize the coreset $S$.
In Line~\ref{A2} we begin to read the points in the (possibly infinite) input stream of points.
In Line~\ref{A3} we update this counter $n$, and in Line~\ref{A4} we read the $n$th point from the stream.
The set $P$ in Line~\ref{A5} is the union of the coreset for the points read till now with the new $n$th point $q$.

In Line~\ref{A6} we define a distribution vector $u$ such that the weighted set $(P,u)$ has the same mean as the mean of the $n$ points $q_1,\cdots,q_n$ that were read till now. The intuition is that the new points represents a faction of $1/n$ from the $n$ points seen so far, but $S$ (the rest of points in $P$) represents $(n-1)/n$ input points. If the $i$th point in $S$ has a weight $w_i$, it means that it represents a fraction of $w_i$ from $S$, i.e., fraction of $w_i(n-1)/n$ from all the data. Indeed, the mean of the $n$ read points $q_1,\cdots,q_n$ and $P$ is the same,
\begin{equation}\label{eq44}
\begin{split}
\frac{1}{n}\sum_{i=1}^n q_i&=\frac{1}{n}\sum_{i=1}^m q_i+\frac{1}{n}\sum_{q\in Q}q\\
&=\frac{m}{n}\sum_{i=1}^{|S|} w_is_i+\sum_{q\in Q}\frac{1}{n}\cdot q
=\sum_{i=1}^{|P|}u_ip_i.
\end{split}
\end{equation}
Also, $u$ is a distribution vector since
\[
\sum_{i=1}^n u_i=\frac{1}{n}+\sum_{i=1}^{|S|}\frac{w_i(n-1)}{n}=\frac{1}{n}+\frac{n-1}{n}=1,
\]
where the second equality is since $w$ is a distribution vector by induction.

In Line~\ref{A7}, we compute a mean coreset $(S,w)$ for $(P,u)$. Since $|P|=d+2$, by Lemma~\ref{lemma} this takes $O(d^3)$, and by~\eqref{eq44} $(S,w)$ is also the mean coreset for the $n$ points read till now.
In Line~\ref{A8} we output $(S,w)$ and repeat for the next point. The required memory is dominated by the set $P$ of $2d+2$ points. We conclude with the following theorem.

\begin{theorem}\label{thmm}
Let $\instream$ be a procedure that outputs a new point in $\REAL^d$ after each call.
A call to $\ccoreset(\instream)$ outputs a mean coreset of cardinality $d+1$ for the first $n$ points in $\instream$, for every $n\geq1$. This takes $O(d^3)$ time for each point update, overall of $O(nd^3)$ time and using at most $d+2$ points in memory.
\end{theorem}

\subsection{Example Applications}
\paragraph{Sum coreset for matrices.} Theorem~\ref{thmm} implies that we can compute the sum of $n$ matrices in $\REAL^{d\times d}$ using a weighted subset of $d^2$ matrices, simply by concatenating the entries of each matrix to a vector in $\REAL^{d^2}$. In Section~\ref{sec:CoresetDef} we reduce this size for the case where we are only interested in the left+right singular vectors of the matrix. This reduction is theoretically small but allowed us to reduce the number of IR markers by more than half in the third paragraph of Subsection~\ref{autuQuad} which was critical to the IR tracking version of our system.

\paragraph{Coreset for SVD.} Let $A\in\REAL^{n\times d}$. Our mean coreset implies that there is a matrix $S$ that consists of $O(d^2)$ scaled rows in $A$ such that for every $x\in\REAL^d$,
$
\norm{Ax}_2=\norm{Sx}_2.
$
This is since
\[
\norm{Ax}_2^2=(Ax)^T(Ax)=x^TA^TAx=x^T(\sum_{i=1}^n a_ia_i^T) x.
\]
The rightmost term can be computed using a sum coreset for matrices as defined above.

\paragraph{Coreset for Linear Regression.} In the case of linear regression, we are also given a vector $b\in\REAL^n$ and wish to compute a matrix $S$ of $O(d^2)$ weighted rows from $A$, and a vector $v$ of the same size, such that for every $x\in \REAL^d$ we have
\[
\norm{Ax-b}_2=\norm{Sx-v}_2.
\]
This can be obtained by replacing $A$ with $[A \mid -b]$ in the previous example.

\paragraph{Streaming and Distributed computation. }Theorem~\ref{thmm} implies that we can compute the above coresets also for possibly infinite streaming set of row vectors or matrices. Similarly, using $m$ machines the running time can be reduced by a factor of $m$ by sending the $i$th point in the stream to the $(i \mod m)$th machine.

\section{Application for Kinematic data: \\Kabsch Coreset}
\label{sec:CoresetDef}
To track a kinematic set of points in $\REAL^3$ (e.g. markers or visual features on a rigid body), we need to define its initial (zero) position, the registered set $P$, and compare it to the observed set $Q$ in the current time or frame. The difference (translation and rotation) between $P$ and $Q$ tells us the current position of the set. Using the Maximum Likelihood approach, and the common assumption of Gaussian noise (which has physical justification), the optimal solution is the translation and rotation of $P=\br{p_1,\cdots,p_m}$ that minimize the sum of squared distances to the corresponding points in $Q=\br{q_1,\cdots,q_n}$,
\[
\cost(P,Q,R):=\sum_{i=1}^n \norm{p_i-Rq_i}^2,
\]
This is known as Wahba's Problem~\cite{wahba65}. We denote this minimum by
\[
\OPT(P,Q):=\min_R \cost(P,Q,R)=\cost(P,Q,R^*).
\]

\paragraph{Tracking translation.} Easy calculations show that the optimal translation as defined above is the mean (center of mass) of $Q$. This mean can be maintained by tracking only the small mean coreset of $Q$ over time as defined in Section~\ref{MeanCoreset}, even without knowing the matching between $P$ and $Q$. 

In this section we thus focus on the more challenging problem of computing the rotation $R$ that minimizes the sum of squared distances between the points of $P$ and $RQ$.

The Kabsch algorithm~\cite{kabsch1976solution} suggests the following simple but provably optimal solution for Wahba's problem. Let $UDV^T$ be a Singular Value Decomposition (SVD) of the matrix $P^TQ$. That is, $UDV^T=P^TQ$, $U^TU=V^TV=I$, and $D\in\REAL^{d\times d}$ is a diagonal matrix whose entries are non-increasing.  In addition, assume that $\det(U)\det(V)=1$, otherwise invert the signs of one of the columns of $V$. Note that $D$ is unique but there might be more than one such factorization.
\begin{theorem}[~\cite{kabsch1976solution}\label{kabsch}]
The matrix $R^*=VU^T$ minimizes $\cost(P,Q,R)$ over every rotation matrix $R$, i.e.,
$
\OPT(P,Q)=\cost(P,Q,R^*).
$
\end{theorem}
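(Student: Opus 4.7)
The plan is to reduce the problem to maximizing a single trace over the orthogonal group, then to read off the maximizer from the SVD.

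First I would expand the cost and exploit the orthogonality of $R$. Since $\norm{Rq_i}^2=q_i^T R^T R q_i=\norm{q_i}^2$, the expansion
\[
\cost(P,Q,R)=\sum_{i=1}^n \norm{p_i}^2+\sum_{i=1}^n \norm{q_i}^2-2\sum_{i=1}^n p_i^T R q_i
\]
shows that the only $R$-dependent term is the last one. Using the cyclic property of trace,
\[
\sum_{i=1}^n p_i^T R q_i=\sum_{i=1}^n \mathrm{tr}(R q_i p_i^T)=\mathrm{tr}\bigl(R\, Q^T P\bigr),
\]
so minimizing $\cost(P,Q,R)$ is equivalent to maximizing $f(R):=\mathrm{tr}(R\, Q^T P)$ over rotation matrices $R$.

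Next I would plug in the SVD. Since $UDV^T=P^TQ$ we have $Q^TP=VDU^T$, so
\[
f(R)=\mathrm{tr}(R V D U^T)=\mathrm{tr}\bigl((U^T R V)\,D\bigr).
\]
Let $M:=U^T R V$. Because $U,V,R$ are orthogonal, so is $M$; and by the assumption $\det(U)\det(V)=1$, we have $\det(M)=\det(R)=1$, so $M$ ranges over all rotation matrices as $R$ does. Writing $D=\mathrm{diag}(d_1,\ldots,d_d)$ with $d_i\ge 0$ non-increasing,
\[
f(R)=\sum_{i=1}^d M_{ii}\, d_i.
\]

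Finally I would bound this sum. The columns of any orthogonal $M$ are unit vectors, so $|M_{ii}|\le 1$ for every $i$, giving $f(R)\le \sum_{i=1}^d d_i=\mathrm{tr}(D)$. Equality is attained by $M=I$, i.e.\ by $R=VU^T$; this choice is a rotation precisely because $\det(VU^T)=\det(V)\det(U)=1$ by assumption. Therefore $R^*=VU^T$ minimizes $\cost(P,Q,R)$, which is exactly the claim.

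The main obstacle I anticipate is the determinant side condition: bounding $|M_{ii}|\le 1$ alone only shows optimality over the full orthogonal group $O(d)$, whereas we need optimality over $SO(d)$. The assumption $\det(U)\det(V)=1$ (enforced in the statement by possibly flipping a column of $V$) is exactly what makes $R=VU^T$ lie in $SO(d)$ and makes $M=I$ an admissible maximizer; without it one would have to argue that flipping the sign associated to the smallest singular value is the best ``correction,'' which is a slightly more delicate case analysis.
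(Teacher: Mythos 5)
Your overall strategy is the standard one and is essentially sound (note that the paper itself gives no proof of this theorem; it is imported from Kabsch's work by citation): expand the cost, reduce to maximizing a trace $\mathrm{tr}(MD)$ over orthogonal $M$ with $\det M=1$, and bound the diagonal entries of $M$ by $1$. The reduction, the role of the side condition $\det(U)\det(V)=1$, and the bound $f(R)\le\mathrm{tr}(D)$ are all correct, and your remark that the determinant condition is exactly what makes the maximizer admissible in $SO(d)$ is on point.

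The one genuine flaw is the final identification. From your own definition $M=U^TRV$, the equality case $M=I$ gives $R=UV^T$, not $VU^T$: multiplying $U^TRV=I$ by $U$ on the left and $V^T$ on the right yields $R=UV^T$. Your derivation is faithful to Definition~\ref{posdef}, whose cross term is $\sum_i p_i^TRq_i=\mathrm{tr}(RQ^TP)=\mathrm{tr}(RVDU^T)$, and the maximizer of that quantity is indeed $UV^T$. What you have actually uncovered is a transpose inconsistency in the paper: $R^*=VU^T$ maximizes $\mathrm{tr}(RP^TQ)$, i.e.\ it minimizes $\sum_i\norm{Rp_i-q_i}^2$ (this is the convention implicitly used in Lemma~\ref{lem}, whose optimality criterion is $\mathrm{tr}(RP^TQ)=\mathrm{tr}(D)$), whereas the cost in Definition~\ref{posdef} is $\sum_i\norm{p_i-Rq_i}^2$, whose minimizer is the transpose $UV^T$. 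A sanity check with $n=1$, $p_1=(1,0)^T$, $q_1=(0,1)^T$ confirms this: there $UV^T$ maps $q_1$ onto $p_1$ while $VU^T$ does the opposite. So either correct your last line to $R^*=UV^T$ and note that the stated theorem matches the mirrored cost, or rerun the computation with the cross term written as $\mathrm{tr}(RP^TQ)$ (equivalently, with the roles of $p_i$ and $q_i$ swapped), which produces $VU^T$ as claimed.
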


We now suggest a coreset (sparse distribution) for this problem.
\begin{definition}[Kabsch Coreset\label{defcore}]
Let $w\in S^n$ be a distribution that defines the matrices $\tilde{P}=\br{\sqrt{w_i}p_i \mid w_i>0, 1\leq i\leq n}$ and $\tilde{Q}=\br{\sqrt{w_i}q_i \mid w_i>0, 1\leq i\leq n}$.
Then $w$ is a \emph{Kabsch coreset} for the pair $(P,Q)$ if for every pair of rotation matrices $A,B\in\REAL^{d\times d}$ and every pair of vectors $\mu,\nu\in\REAL^d$ the following holds:
A rotation matrix $\tilde{R}$ that minimizes
$\cost(\tilde{P}A+\mu, \tilde{Q}B+\nu,R)$ over every rotation matrix $R$, is also optimal for $(PA+\mu,QB+\nu)$, i.e.,
\[
\OPT(PA+\mu,QB+\nu)=\cost(PA+\mu,QB+\nu,\tilde{R}).
\]
\end{definition}

This implies that we can use the same coreset even if the set $Q$ is translated or rotated over time. Such a coreset is efficient if it is also small (i.e. the distribution vector $w$ is sparse).
\newcommand{\RemovePoint}{\textsc{DeletePoint}}
\begin{algorithm}[th]
{\begin{tabbing}
\textbf{Input:\quad} \=A pair of matrices $P,Q\in\REAL^{n\times d}$.\\
\textbf{Output:} \>A sparse Kabsch coreset $w=(w_1,\cdots,w_n)$ \\ \>
for $(P,Q)$; see Definition~\ref{defcore}.
\end{tabbing}}
Set $d'\gets r(d-1)$ where $r$ is the rank of $P^TQ$ \\
Set $UDV^T\gets$ an SVD of $P^TQ$ \\
\For {\textbf{each} $i\in[n]$}{
Set $p_i$ and $q_i$ to be the $i$th row of $P$ and $Q$, respectively. \\
Set $m_i\in\REAL^{d'}$ as the entries of $U^Tp^T_iq_iV$, excluding its diagonal and last $d-r$ rows. \label{fivef}\\
}
$\instream\gets \br{m_1,m_2,\cdots,m_n}$\label{sixf} \\\
$(S,w')\gets \ccoreset(\instream)$; see Algorithm~\ref{strcaraalg}\label{sevenf}\\
$(w_1,\cdots,w_n)\gets$ the weights of $w'$ that correspond to $S$ padded with zero entries.\label{eightf}\\ 
\Return $w$\\
\caption{\coresetb($P,Q$)\label{caraalg}}
\end{algorithm}

Recall that $UDV^T$ is the SVD of $P^TQ$, and let $r$ denote the rank of $P^TQ$, i.e., number of non-zero entries in the diagonal of $D$.
Let $D_r\in\REAL^{d\times d}$ denote the diagonal matrix whose diagonal is $1$ in its first $r$ entries, and $0$ otherwise.

\begin{lemma}\label{lem}
Let $R=GF^T$ be a rotation matrix, such that $F$ and $G$ are orthogonal matrices, and $GD_rF^T=VD_rU^T$.
then $R$ is an optimal rotation, i.e.,
$$\OPT(P,Q)=\cost(P,Q,R).$$
Moreover, the matrix $VD_rU^T$ is unique and independent of the chosen Singular Value Decomposition $UDV^T$ of $P^TQ$.
\end{lemma}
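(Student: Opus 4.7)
The plan has two parts: reducing ``$R=GF^T$ is optimal'' to the single matrix equation $RUD_r=VD_r$, and establishing the uniqueness of $VD_rU^T$ via the polar decomposition of $P^TQ$.

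For the first part, I would rerun the Kabsch argument of Theorem~\ref{kabsch} at a finer resolution in order to characterize the whole set of optimal rotations. Expanding $\cost(P,Q,R)$ and substituting the SVD $P^TQ=UDV^T$, the minimization reduces to maximizing $\operatorname{tr}(DM)$ over rotation matrices $R$, where $M=V^TRU$ is orthogonal. Because the diagonal of $D$ is $(\sigma_1,\dots,\sigma_r,0,\dots,0)$ with each $\sigma_i>0$, and every diagonal entry of an orthogonal matrix lies in $[-1,1]$, the maximum $\sum_i\sigma_i$ is attained precisely when $M_{ii}=1$ for every $i\le r$; by orthogonality of $M$ this is equivalent to $MD_r=D_r$, and hence to $RUD_r=VD_r$. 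Thus $R$ is optimal if and only if $RUD_r=VD_r$.

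The core of the proof is deducing $RUD_r=VD_r$ from the hypothesis. Setting $C=F^TU$ and $E=V^TG$ (both orthogonal), the target rewrites as $GCD_r=VD_r$, while left-multiplying the hypothesis $GD_rF^T=VD_rU^T$ by $V^T$ and right-multiplying by $U$ yields $ED_rC=D_r$. I expect the main obstacle to be extracting a block-diagonal structure from this single equation. Reading $ED_rC=D_r$ in the $r\mid(d-r)$ block form gives $E_{11}C_{11}=I_r$, $E_{11}C_{12}=0$, and $E_{21}C_{11}=0$. Since every submatrix of an orthogonal matrix has spectral norm at most $1$, the singular values of $E_{11}$ lie in $[0,1]$ and those of its inverse $C_{11}$ are their reciprocals (also in $[0,1]$), which pins every singular value of $E_{11}$ at $1$ and makes both $E_{11}$ and $C_{11}$ orthogonal. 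This immediately kills $C_{12}$ and $E_{21}$, and the orthogonality relations $EE^T=I$ and $C^TC=I$ then force $E_{12}=0$ and $C_{21}=0$. Once $C$ is block-diagonal it commutes with $D_r$, so $GCD_r=GD_rC=(GD_rF^T)U=VD_rU^TU=VD_r$, establishing optimality.

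For the uniqueness of $VD_rU^T$, I would appeal to the polar decomposition of $M=P^TQ$: writing $S=(M^TM)^{1/2}=VDV^T$ and $W=MS^+=UD_rV^T$, where $S^+$ is the Moore--Penrose pseudoinverse of $S$, one has $M=WS$ with $S$ the unique positive-semidefinite square root of $M^TM$ and $W$ the unique rank-$r$ partial isometry sending $\ker S$ to $0$. Both $W$ and $S$ are determined by $M$ alone, so $VD_rU^T=W^T$ is independent of the particular SVD of $P^TQ$, yielding the uniqueness claim.
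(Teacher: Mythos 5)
Your proof is correct, and it splits cleanly into a part that genuinely differs from the paper and a part that coincides with it. For the uniqueness claim you use exactly the paper's device: $VD_rU^T$ is (the transpose of) the partial-isometry factor in the polar decomposition of $N=P^TQ$, hence a function of $N$ alone; the paper computes $(N^TN)^{1/2}N^{+}=VD_rU^T$ while you write $W=NS^{+}$ with $S=(N^TN)^{1/2}$, which is the same calculation. For the optimality claim the routes diverge. The paper takes the sufficient condition $\mathrm{tr}(RP^TQ)=\mathrm{tr}(D)$ as a black box from the literature and verifies it by splitting $\mathrm{tr}(GF^TUDV^T)$ into $\mathrm{tr}(GD_rF^T\cdot UDV^T)+\mathrm{tr}(G(I-D_r)F^T\cdot UDV^T)$: the first term collapses to $\mathrm{tr}(D)$ directly from the hypothesis $GD_rF^T=VD_rU^T$, and the second vanishes after substituting $D_rU^T=V^TGD_rF^T$ and using $(I-D_r)D_r=0$. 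You instead rederive the trace criterion from scratch, upgrade it to the exact characterization ``$R$ is optimal iff $RUD_r=VD_r$'', and then extract that identity from the hypothesis by a block analysis of $ED_rC=D_r$ with $E=V^TG$, $C=F^TU$. Your singular-value pinching step (a submatrix of an orthogonal matrix has spectral norm at most $1$, so $E_{11}C_{11}=I_r$ forces all singular values of $E_{11}$ to equal $1$) is sound, it correctly kills the off-diagonal blocks, and $CD_r=D_rC$ then closes the argument via $RUD_r=GD_rF^TU=VD_r$. What your approach buys is self-containedness and a complete description of the set of optimal rotations; what the paper's buys is brevity, since its two-term trace computation never needs the block structure. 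Both arguments share the paper's normalization $\det(U)\det(V)=1$ (and its transpose conventions around Theorem~\ref{kabsch}), so neither is more fragile than the other on that point.
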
\begin{proof}
\newcommand{\tr}{\mathrm{Tr}}
It is easy to prove that $R$ is optimal, if
\begin{equation}\label{e4}
\tr(RP^TQ)=\tr(D);
\end{equation}
see~\cite{kjer2010evaluation} for details.
Indeed, the trace of the matrix $RP^TQ$ is
\begin{align}
\nonumber\tr(RP^TQ)&=\tr(RUDV^T)
=\tr(GF^T(UDV^T))\\
\label{e1}&=\tr(GD_rF^T\cdot UDV^T)\\
\label{e11}&\quad+\tr(G(I-D_r)F^T\cdot UDV^T).
\end{align}
Term \eqref{e1} equals
\begin{equation}\label{e2}
\begin{split}
&\tr(GD_rF^T\cdot UDV^T)=\tr(VD_rU^T\cdot UDV^T) \\
&=\tr(VDV^T))=\tr(DV^TV)=\tr(D),
\end{split}
\end{equation}
where the last equality holds since the trace is invariant under cyclic permutations.
Term~\eqref{e11} equals
\[
\begin{split}
&\tr(G(I-D_r)F^T\cdot UDV^T) \\
&=\tr(G(I-D_r)F^T\cdot (D_rU^T)^T DV^T)\\
&=\tr(G(I-D_r)F^T\cdot (V^TGD_rF^T)^T DV^T) \\
&=\tr(G(I-D_r)F^T\cdot FD_r^TG^TV\cdot DV^T)\\
&=\tr(G\cdot (I-D_r)D_r \cdot G^TV\cdot DV^T)
=0,
\end{split}
\]
where the last equality follows since the matrix $(I-D_r)D_r$ has only zero entries.
Plugging the last equality and~\eqref{e2}
in~\eqref{e1} yields $\tr(RP^TQ)=\tr(D)$.
Using this and~\eqref{e4} we have that $R$ is optimal.

For the uniqueness of the matrix $VD_rU^T$,
observe that for $N=P^TQ=UDV^T$ we have
\begin{equation}\label{e5}
(N^TN)^{1/2}(N)^{+}
=(VDV^T)(VD^{+}U^T)
=VD_rU^T.
\end{equation}
Here, a squared root $X^{1/2}$ for a matrix $X$ is a matrix such that $(X^{1/2})^2=X$, and $X^{+}$ denote the pseudo inverse of $X$.
Let $FEG^T$ be an SVD of $N$. Similarly to~\eqref{e5}, $(N^TN)^{1/2}(N)^{+}=GD_rF^T$.

Since $N^TN=VD^2V^T$ is a positive-semidefinite matrix, it has a unique square root.
Since the pseudo inverse of a matrix is also unique, we conclude that $(N^TN)^{1/2}(N)^{+}$ is unique, and thus $VD_rU^T=GD_rF^T$.
\end{proof}

\textbf{Overview of Algorithm~\ref{caraalg}. }The input is a pair $(P,Q)$ of $n\times d$ matrices that represent two paired set of points in $\REAL^d$. To obtain an object's pose, we need to apply the Kabsch algorithm on the matrix $P^TQ=\sum_ip_i^Tq_i$; see Theorem~\ref{kabsch}. Algorithm~\ref{caraalg} outputs a sparse weight vector $w=(w_1,\cdots,w_n)$ such that the summation $P^TQ$ equals to the weighted sum $\sum_iw_ip_i^Tq_i$ of at most $r(d-1)+1$ matrices, where $w$ is a \emph{Kabsch-coreset} as in Definition~\ref{defcore}.

This is done by by choosing $w$ such that
\begin{equation}\label{EE}
E=U^T\left(\sum_i w_ip_iq_i^T\right)V=\sum_i w_i (U^Tp_iq_iV)
\end{equation}
is a diagonal matrix. In this case, the rotation matrix of the pairs $(\sqrt{w_i}p_i, \sqrt{w_i}q_i)_{i=1}^n$ and $(P,Q)$ will be the same by Theorem~\ref{kabsch}. By letting $m_i=(U^Tp_iq_iV)$ we need to have the sum $\sum_{i=1}^n m_i$ by a weighted subset of the same sum.

This vector $m_i$ is computed in Line~\ref{fivef}. In Line~\ref{sixf} we compute a mean coreset $(S',w')$ using Algorithm~\ref{strcaraalg} for the $n$ vectors $m_1,\cdots,m_n$. Since the mean coreset contains only the non-zero weights with their corresponding points, in Line~\ref{eightf} we translate the $|S'|=O(d^2)$ weights in $w'$ to the sparse vector $w$: if $s_i$ is the $i$th point in $S'$ and $s_i=m_j$, then $w_j=w'_i$. Theorem~\ref{thmm} then guarantees that~\eqref{EE} holds as desired.

We now prove main theorem of this section.
\begin{theorem}\label{mainthm}
Let $P,Q\in\REAL^{n\times d}$ be a pair of matrices.
Let $r$ denote the rank of the matrix $P$.
Then a call to the procedure $\coresetb(P,Q)$ returns a Kabsch-coreset $w$ of sparsity at most $r(d-1)+1$ for $(P,Q)$ in $O(nd^4)$ time; see Definition~\ref{defcore}.
\end{theorem}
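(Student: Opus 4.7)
The plan is to show that $\tilde P^T\tilde Q := \sum_i w_i p_i^T q_i$ equals $c\cdot P^TQ$ for some $c>0$; by the uniqueness of the matrix $VD_rU^T$ in Lemma~\ref{lem}, both pairs then yield the same such matrix, so the Kabsch rotation $\tilde V\tilde U^T$ for $(\tilde P,\tilde Q)$ satisfies the sufficient condition of Lemma~\ref{lem} for $(P,Q)$ and is optimal there as well. The extension to the transformed pair $(PA+\mu,QB+\nu)$ in Definition~\ref{defcore} would then follow immediately, since the centered cross-product differs from that of $(P,Q)$ only by orthogonal conjugation by $A$ and $B$, which preserves the structural argument.

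First I would fix the SVD $P^TQ = UDV^T$ used inside \coreset\ and set $M_i := U^T p_i^T q_i V$, so that $\sum_i M_i = D$. A structural observation comes up front: because $r=\operatorname{rank}(P)$, the last $d-r$ columns of $U$ are orthogonal to the row space of $P$, so $U^T p_i^T$ vanishes in those coordinates and the bottom $d-r$ rows of each $M_i = (U^T p_i^T)(q_i V)$ are identically zero. Hence $\sum_i w_i M_i$ automatically carries a zero block in rows $r{+}1,\dots,d$ for every choice of $w$, and the only remaining constraints for diagonality are that the $r(d-1)$ off-diagonal entries in its first $r$ rows vanish --- these are precisely the coordinates that \coreset\ packs into $m_i \in \mathbb{R}^{d'}$, and since $\sum_i M_i = D$ is diagonal their unweighted mean $\bar m$ is the zero vector.

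The task thus reduces to finding a sparse distribution $w$ with $\sum_i w_i m_i = 0$, which is exactly Caratheodory's theorem. I would prove Algorithm~\ref{corealg} correct by direct inspection: the rightmost column of $H$ lies in the right null space of $[v_2\mid\cdots\mid v_{d'+2}]^T$, so $\sum_{i\ge 2} h_i v_i = 0$; extending by $h_1=-\sum_{i\ge 2} h_i$ yields $\sum_i h_i u_i/\|u_i\| = 0$ together with $\sum_i h_i = 0$; and the minimal ratio $\alpha = \min_{i:h_i>0} z_i/h_i$ zeros exactly one coordinate of $z - \alpha h$ while keeping the others non-negative, producing a sparser distribution with the same weighted mean. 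Algorithm~\ref{caraalg} then runs this reduction as a one-pass stream, and I would prove by induction on the number of points read that it maintains an active set of size at most $d'+1$ whose weights encode a distribution on the points seen so far whose weighted $m$-sum equals their running mean. At termination the output $w\in S^n$ has support at most $d'+1 = r(d-1)+1$ and $\sum_i w_i m_i = 0$.

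Putting the pieces together, $E := U^T\tilde P^T\tilde Q V$ is diagonal with the same zero-row block as $D$; combining this with the rank bound $\operatorname{rank}(\tilde P^T\tilde Q)\le\operatorname{rank}(\tilde P)\le r$ and tracking the overall scaling inherited from the distribution forces $E = cD$, hence $\tilde P^T\tilde Q = c\,P^TQ$, and the coreset property follows via Lemma~\ref{lem}. The $O(nd^2)$ running time is then straightforward: one SVD of $P^TQ$ in $O(nd^2+d^3)$, plus $O(d^2)$ per iteration to form $m_i$, plus a per-call cost for \cara\ that depends only on $d$. The hard part will be this last proportionality step: asserting $E = cD$ rather than merely a diagonal matrix with the correct zero block requires reconciling the signs and magnitudes of its diagonal entries via the rank-$r$ constraint on $\tilde P$, and this is where the argument must be most delicate.
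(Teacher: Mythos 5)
Your reduction to Caratheodory is essentially the paper's: you correctly identify that the last $d-r$ rows of each $M_i=U^Tp_i^Tq_iV$ vanish, that $\sum_i M_i=D$ makes the mean of the off-diagonal coordinate vectors $m_i$ equal to zero, and that Algorithms~\ref{caraalg} and~\ref{corealg} implement a streaming Caratheodory step producing a distribution $w$ of support $r(d-1)+1$ with $\sum_i w_i m_i=0$. The gap is in how you convert this back into a statement about rotations. Your plan requires $E:=U^T\tilde{P}^T\tilde{Q}V$ to equal $cD$, i.e.\ $\tilde{P}^T\tilde{Q}=c\,P^TQ$, and you yourself flag this as the unresolved ``delicate'' step. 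It cannot be closed: the Caratheodory step constrains only the off-diagonal coordinates, so the diagonal entries of $E$ are $\sum_i w_i(M_i)_{jj}$ for an essentially arbitrary sparse distribution $w$, and nothing forces them to be proportional to $D_{jj}=\sum_i(M_i)_{jj}$. The rank bound $\operatorname{rank}(\tilde{P}^T\tilde{Q})\le r$ only confines the support of the diagonal, not the ratios among its entries, so $E=cD$ is false in general.

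The paper avoids this entirely because proportionality is never needed. Once $E$ is diagonal and supported on its first $r$ entries, $UEV^T$ is a singular value decomposition of $\tilde{P}^T\tilde{Q}$ with the \emph{same} singular-vector matrices $U,V$ as $P^TQ$ (and $A^TUEV^TB$ is one of $A^T\tilde{P}^T\tilde{Q}B$). By the uniqueness clause of Lemma~\ref{lem}, the matrix $VD_rU^T$ computed from any SVD of the coreset product coincides with the one computed from $P^TQ$, so a rotation $\tilde{R}=GF^T$ returned by Kabsch on the coreset satisfies $GD_rF^T=VD_rU^T$ (conjugated by $A$ and $B$ in the transformed case) and is therefore optimal for $(P,Q)$ and for $(PA+\mu,QB+\nu)$ by the first clause of Lemma~\ref{lem}. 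If you replace your ``$E=cD$'' step with this invocation of Lemma~\ref{lem}, the rest of your argument --- the zero-block observation, the reduction to $\sum_i w_im_i=0$, the null-space and minimal-ratio analysis of Algorithm~\ref{corealg}, and the $O(nd^2)$ accounting --- matches the paper's proof.
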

\begin{proof}
Since $(S,w')$ is a mean coreset for $m_1,\cdots,m_n$ we have that $w$ is a distribution of sparsity at most $r(d-1)+1$, such that
\begin{equation}\label{eq10}
  E=U^T\left(\sum_i p_iq_i^T\right)V=U^T\left(\sum_i w_ip_iq_i^T\right)V
\end{equation}
is diagonal and consists of at most $r$ non-zero entries. Here $p_i$ and $q_i$ are columns vectors which represent the $i$th row of $P$ and $Q$ respectively.
Let $\br{\sqrt{w_i}p_i\mid w_i>0}$ and $\br{\sqrt{w_i}q_i\mid w_i>0}$ be the rows of $\tilde{P}$ and $\tilde{Q}$ respectively.
Let $FEG^T$ be an SVD of $A^T\tilde{P}^T\tilde{Q}B$ such that $\det(F)\det(G)=1$, and let $\tilde{R}=GF^T$ be an optimal rotation of this pair; see Theorem~\ref{kabsch}. We need to prove that $$\OPT(PA+\mu,QB+\nu)=\cost(PA+\mu,QB+\nu,\tilde{R}).$$
We assume without loss of generality that $\mu=\nu=0$, since translating the pair of matrices does not change the optimal rotation between them~\cite{kjer2010evaluation}.

By~\eqref{eq10}, $UEV^T$ is an SVD of $\tilde{P}^T\tilde{Q}$, and thus $A^TUEV^TB$ is an SVD of $A^T\tilde{P}^T\tilde{Q}B$. Replacing $P$ and $Q$ with $\tilde{P}A$ and $\tilde{Q}B$ respectively in Lemma~\ref{lem} we have that $GD_rF^T=B^TVD_rU^TA$.
Note that since $UDV^T$ is an SVD of $P^TQ$, we have that $A^TUDV^TB$ is an SVD of $A^TP^TQB$.
Using this in Lemma~\ref{lem} with $PA$ and $QB$ instead of $P$ and $Q$ respectively yields that $\tilde{R}=GF^T$ is an optimal rotation for the pair $(PA,QB)$ as desired, i.e.,
\[
\OPT(PA,QB)=\cost(PA,QB,\tilde{R}).
\]

\end{proof}

\section{From Theory to Real Time Tracking System\label{realtime}}
While our coresets are small and optimal, they come with a price: unlike random sampling which takes sub-linear time
to compute (without going over the markers), computing our  coreset takes the same time as solving the
 pose estimation problem on the same frame. Hence, we use the following pair of parallel threads.

The first thread, which we run at $1$ to $3$ FPS (frames per second), gets a snapshot (frame) of markers and computes the coreset for this frame. This includes marker identification, matching problem, and then
computing the actual coreset for the original set of markers $P$ and the observed set $Q$.
The second thread, which calculates the object's pose, runs every frame. In our low-cost tracking system (see Section~\ref{sec:Experiments}) it handles $30$ FPS. This is by using the last computed coreset on the new frames, until the first thread computes a new coreset for a later frame. The assumption of this model is that, for frames that are close to each other in time, the translation and rotation of the observed set of markers will be similar to the translation and rotation of the set $Q$ in the previous frame, up to a small
error. Theorem~\ref{kabsch} guarantees that the coreset for the first frame will still be the same for the new frame.

\section{Experimental Results}

\label{sec:Experiments}
We run the following types of experiments:

\subsection{Synthetic data} We constructed a set $P$ of $n$ randomly and uniformly sampled points in $\REAL^3$, a rotation matrix $R\in\REAL^{3\times 3}$ and a translation vector $t\in\REAL^d$ from a uniform distribution. We defined $Q = P\cdot R + t$ and aims to reconstruct $R$ and $t$ using the following methods: (i) Calculate the optimal rotation matrix and optimal translation vector from $P$ and $Q$, as described in Section~\ref{sec:CoresetDef}, (ii) Compute the same from the  Kabsch-coreset (see algorithm~\ref{A3}) of size $r\cdot (d-1)+1 = 7$ (where $r=d=3$) and the Mean-coreset (see algorithm~\ref{A1}) of size $d+1=4$, (iii) Uniform sampling of two sets of corresponding points from $P$ and $Q$, one of size $7$ and the second of size $4$, and compute $R$ and $t$ from these sets.

\paragraph{Non-noisy data.} Here we generated data as described above for $100$ iterations, where the set $P = \{p_1,p_2,...,p_{300}\}$ consisted of $300$ randomly sampled points. Each point $p_i \in [0,3000]^3$, $t \in [0,3000]^3$ and $R$ was randomly selected among all valid 3D rotation matrices. We then compared methods (i) and (ii) where the coreset was computed in the first iteration only, and used throughout all other iterations. The results are shown in Fig.~\ref{non-noisy-data}. As proven in Section~\ref{sec:CoresetDef}, the two methods yielded similar results since the data is non-noisy. Surprisingly, the coreset error is sometimes even lower than the error of the optimal method probably since the coreset reduces numerical errors; see beginning of Section~\ref{MeanCoreset}.

\begin{figure}[hbtp]
\centering
\includegraphics[scale=0.137]{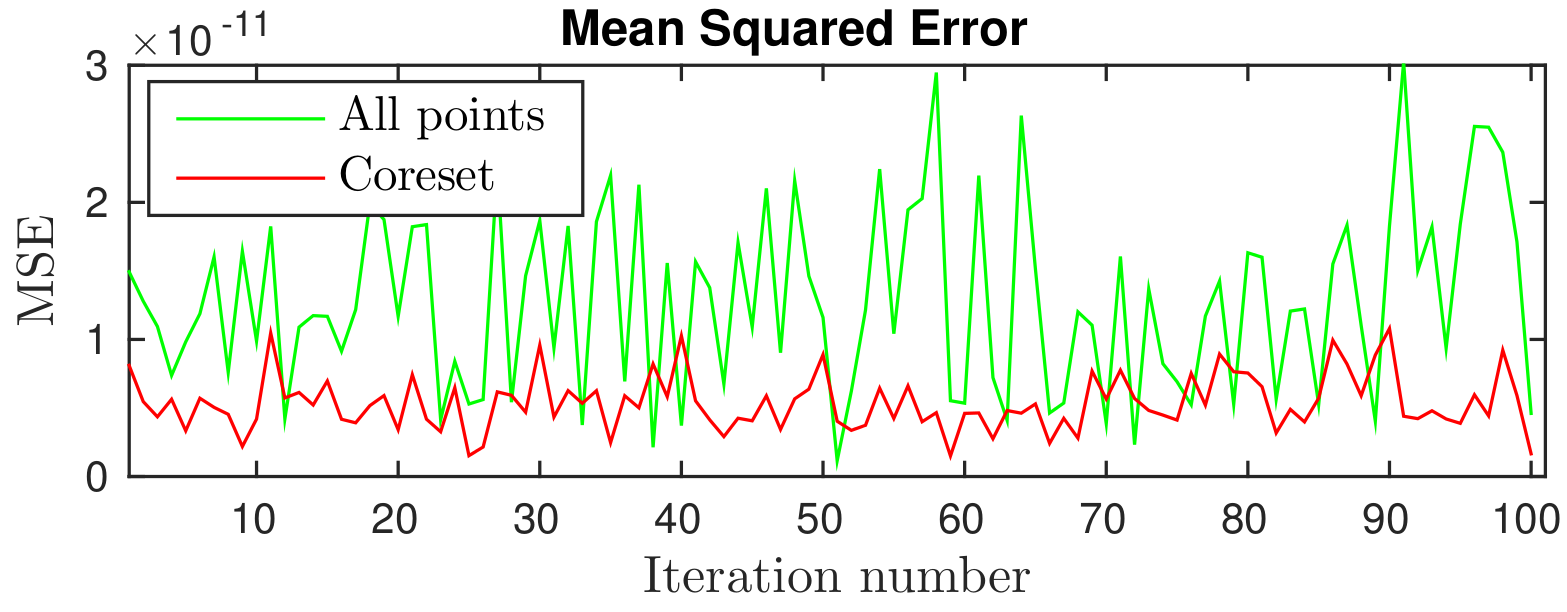}
\caption{\small Comparing the results of methods (i) and (ii). The $X-axis$ represents the number of iterations. The $Y-axis$ represents the Mean Squared Errors between the two sets after applying the optimal poses obtained from each of the two methods. \label{non-noisy-data}}
\end{figure}

\paragraph{Noisy data. } Here our goal was to test the coresets in the presence of noise. We generated a set $P = \{p_1,p_2,...,p_{100}\} \in R^{100 \times 3}$ of $100$ randomly sampled points. Each point $p_i \in [0,1000]^3$, $t$ is a random vector in $[0,1000]^3$ and $R$ was randomly selected among all valid 3D rotation matrices. We then computed the set $Q = P\cdot R + t + m\cdot B$, where $B \in R^{100 \times 3}$ consists of random and uniform noise in the range $[0,100]$, and $m$ is the magnitude of the noise. This test compares the error produced by methods (i)--(iii) while increasing the value of $m$ for multiple iterations. The coreset was recomputed every $x$ iterations and the random points were also resampled every $x$ iterations, where $x$ is the calculation cycle. The results are shown in Fig.~\ref{noisy-data}, the first graph shows the results for $x = 20$, the second graph shows the results for $x = 300$ and the third graph shows the results for $x = \infty$ (i.e. computed only once). The results show a steady increase in the error of method (iii). Our coreset's error steadily increases until a new coreset is recalculated, at that point the coreset error realigned with the error of method (i) as expected, resulting in stiff decreases that are seen in the graphs. Moreover, the coreset error converges to the error of the random sampling in the third graph (as expected) since the coreset is not recomputed while the noise magnitude becomes larger, in this case the coreset points do not outperform a random sample of the points.

\begin{figure}[hbtp]
\centering
\includegraphics[scale=0.6]{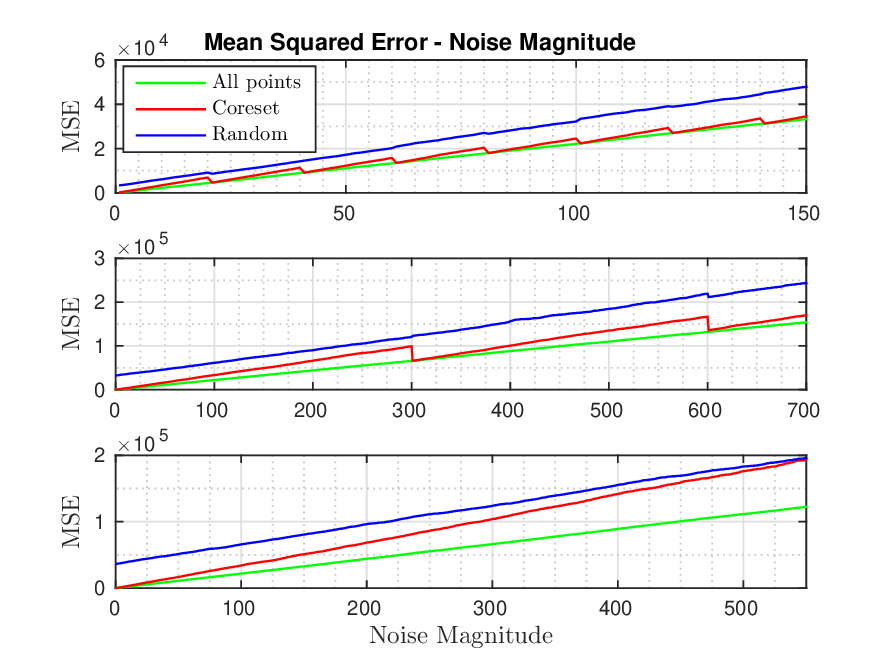}
\caption{\small Comparing the results of methods (i), (ii) and (iii). The $X-axis$ represents the noise magnitude $m$. The $Y-axis$ represents the MSE between the two sets after applying the optimal poses obtained from each of the methods. \label{noisy-data}}
\end{figure}

\paragraph{Running Time. }To evaluate the running time of our algorithms, we apply them on random data using a laptop with an Intel Core i7-4710HQ CPU @ 2.50GHz processor. We compared the calculation time of the pose estimation on a coreset vs. the full set. This test consists of two cases: a) Using an increasing number of points while  maintaining a constant dimension,  b) Using a constant number of points of different dimensions. The results are shown in Figures \ref{fig-time-a} and \ref{fig-time-b} respectively.
The test corresponds to the first row of Table~\ref{table-label}. Fig. \ref{fig-time-a} shows that when the coreset size of algorithm~\ref{A3} is larger than the number $n$ of points, the computation time is roughly identical, and as $n$ reaches beyond $dr = O(d^2)$, the computation time using the full set of points continues to grow linearly with $n$ ($O(nd^2)$), while the computation time using the coreset, which is dominated by the computation of the optimal rotation, ceases to increase since it is independent of $n$ ($d^3r$ = $O(d^4)$). Fig. \ref{fig-time-b} shows that the coreset indeed yields smaller computation times compared to the full set of points when the dimension $d < \sqrt{n}$, and both yield roughly the same computation time as $d$ reaches $\sqrt{n}$ and beyond.

\begin{figure}
\begin{subfigure}[h]{0.49\textwidth}
		\centering
		\includegraphics[scale=0.55]{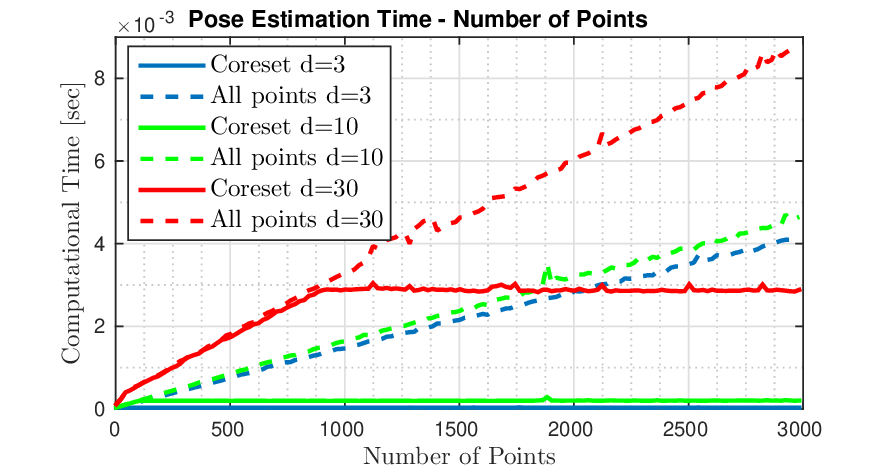}
		\centering\caption{\label{fig-time-a}}
	\end{subfigure}
	\begin{subfigure}[h]{0.49\textwidth}
		\centering\centering		
		\includegraphics[scale=0.55]{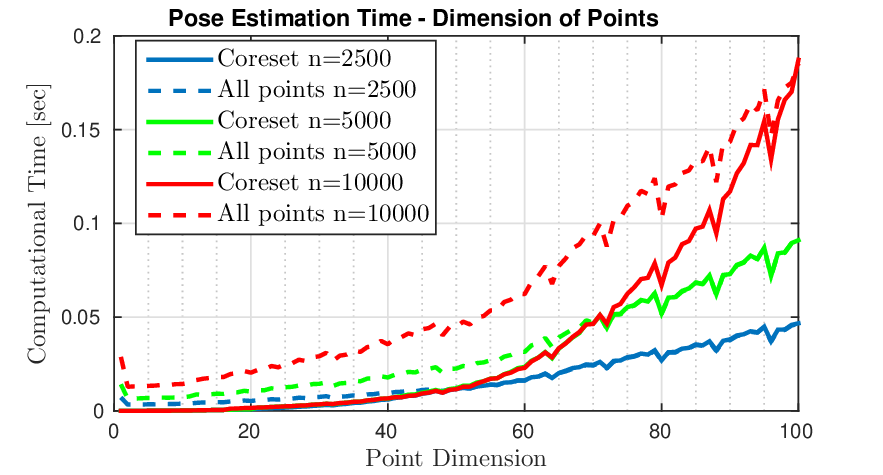}
		\caption{\label{fig-time-b}}
	\end{subfigure}

\caption{\small Time comparison between calculating the orientation of $n$ points of dimension $d$ given a previously calculated coreset versus using all $n$ points. The Y-axis of both graphs represents the time needed to obtain the orientation. In~\ref{fig-time-a}, the X-axis represents the number of points while in~\ref{fig-time-b} the X-axis represents the points dimension.\label{compgraphs}}
\end{figure}

\subsection{Multi-camera Wireless Low-cost Tracking System}
We developed a wireless and low-cost home-made indoor tracking system ($<\$100$) based on web-cams, IoT mini-computers and the algorithms in this paper to compensate the weak hardware. The system consists of distributed ``client nodes" (one or more) and one ``server node". Each client node contains 2 components: (A) a mini-computer, Odroid U3 ($<\$30$) and (B) a pair of standard web-cams (SONY PSEye, $<\$5$). The server node consists only of a mini-computer. The server node runs the two threads discussed in Section~\ref{realtime}.

\subsubsection*{Autonomous quadcopter.\label{autuQuad}}
We used our tracking system to compute the $6DoF$ of the quadcopter and send control commands accordingly after reverse engineering its communication protocol. We compared the orientation error of the quadcopter using our coreset as compared to uniform sampling of the IR or visual markers on the quadcopter. 

In both tests, the coreset was computed every $x$ frames, the random points were also sampled every $x$ frames, where $x$ is the calculation cycle time. The chosen weighted points were used for the next $x$ frames, and then a new Kabsch-Coreset of size $r(d-1)+1=5$ was computed by Algorithm \ref{A3}, where $d=3$ and $r=2$ as the features on the quadcopter are roughly in a planar configuration.\\
See Section~\ref{realtime} and the video in the supplementary material for demonstrations and results.

\begin{figure}[h]
\centering
\includegraphics[scale=0.6]{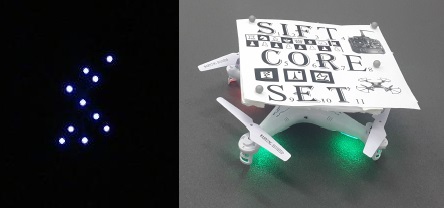}
\caption{\textbf{(left)} 10 IR markers as captured by the web-camera with the IR filter. \textbf{(right)} {A toy micro-quadcopter with a planar pattern (printed text and other featurs) placed on top}.\label{fig2}}
\end{figure}

\paragraph{Infra-Red (IR) Tracking.}
Following the common approaches used by the commercial tracking systems, we used IR markers for tracking.
We placed $10$ Infra-red LEDs on the quadcopter and modified the web-cams' lenses to let only infrared spectrum rays pass, see Fig.~\ref{fig2}. We could not place more than $10$ LEDs on such a micro quadcopter because of over-weight problem and short battery life.
Since the sensorless quadcopter requires a stream of at least $30$ control commands per second in order to hover and not crash, we apply the Kabsch algorithm only on a selected a subset of 5 points. Our experiments showed that even for such small numbers, choosing the right subset is crucial for a stable system.

The system computes the 3D location of each LED using triangulation.  Afterwards, it uses Algorithm~\ref{A3} to compute a Kabsch-Coreset of size $r(d-1)+1 = 5$ from the 3D locations, where $d=3$ and $r=2$ as the features on the quadcopter are roughly in a planar configuration, and samples a random subset (``RANSAC") of the same size. The ground truth in this test was obtained from the OptiTrack system. The control of the quadcopter based on its positioning was done using a simple PID controller.

For different calculation cycles, we computed the average error through out the whole test, which consisted of roughly $4500$ frames. The results are shown in Fig.~\ref{figIoT}.

\begin{figure}[h!]
\begin{subfigure}[h]{0.49\textwidth}
\centering
\includegraphics[scale=0.6]{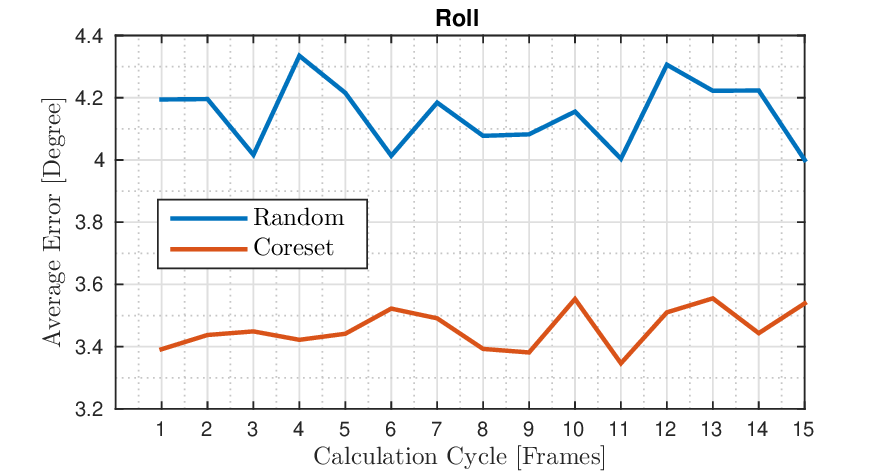}
\end{subfigure}
\begin{subfigure}[h]{0.49\textwidth}
\centering
\includegraphics[scale=0.6]{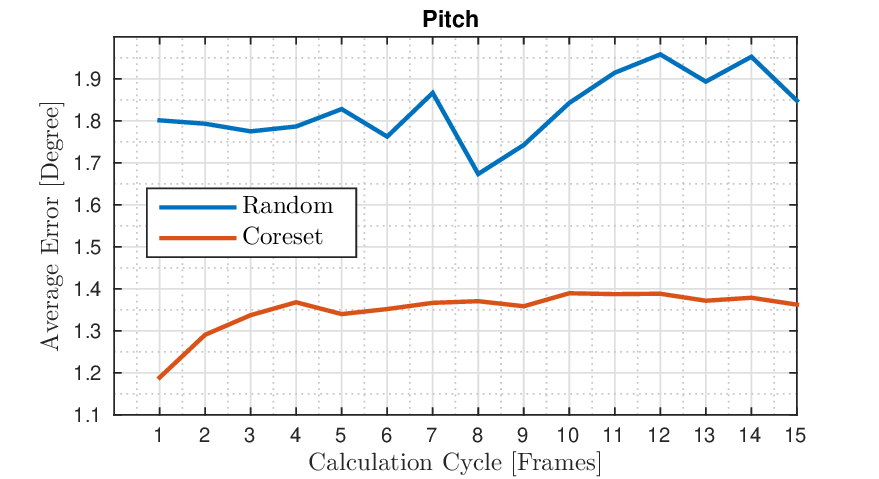}
\end{subfigure}
\begin{subfigure}[h]{0.49\textwidth}
\centering
\includegraphics[scale=0.6]{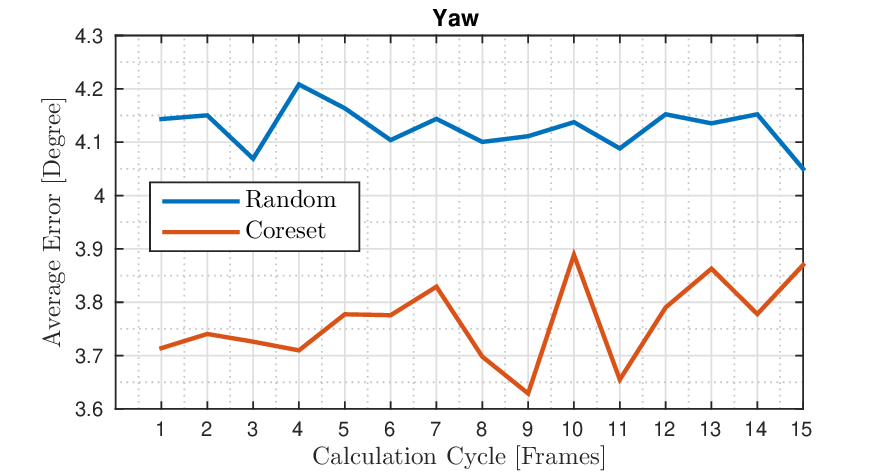}
\end{subfigure}
\caption{\small IR Tracking test: For every calculation cycle (X-axis), we compare between the core-set average error and the uniform random sampling average error. The Y-axis shows the whole test average error for each calculation cycle.\label{figIoT}}
\end{figure}

\paragraph{RGB Tracking.} To test larger set of points, we used our tracking system to track visual features (RGB images). We placed a simple planar pattern on a quadcopter, see Fig.~\ref{fig2}.
Due to the time complexity of extracting visual features, we also placed few IR reflective markers and used the OptiTrack motion capture system to perform an autonomous hover with the quadcopter, whilst two other 2D grayscale cameras mounted above the quadcopter collected and tracked visual features from the pattern using SIFT feature detector; see submitted video. The matching between the SIFT features in both images has some mismatches. This is discussed at the end of Section~\ref{sec:ourContrib}.
Given 2D coordinates of the extracted visual features from two cameras, we were able to compute the 3D location of each detected feature using triangulation. As in the IR markers test, a Kabsch-Coreset of size $5$ was computed, along side a random sample of the same size.
The quadcopter's orientation was then estimated by computing the optimal rotation matrix, using the Kabsch algorithm, on both the coreset points and the random sampled points.
The ground truth in this test was obtained using the Kabsch algorithm on all the points in the current frame.

For different calculation cycles, we computed the average error through out the whole test, which consisted of $\sim3000$ frames, as shown in Fig.~\ref{figSG}. The number of detected SIFT features in each frame was $60--100$, though most of the features did not last for more than $15$ consequent frames, therefore we tested the coreset with calculation cycles in the range $1$ to $15$. The average errors were smaller than the average errors in the previous test due to the inaccurate 3D estimation using low-cost hardware in the previous test, e.g. $\$5$ web-cams as compared to OptiTrack's $\$1000$ cameras, and due to the difference between the ground truth measurements in the two tests.

\begin{figure}[h!]
\begin{subfigure}[h]{0.49\textwidth}
\centering
\includegraphics[scale=0.6]{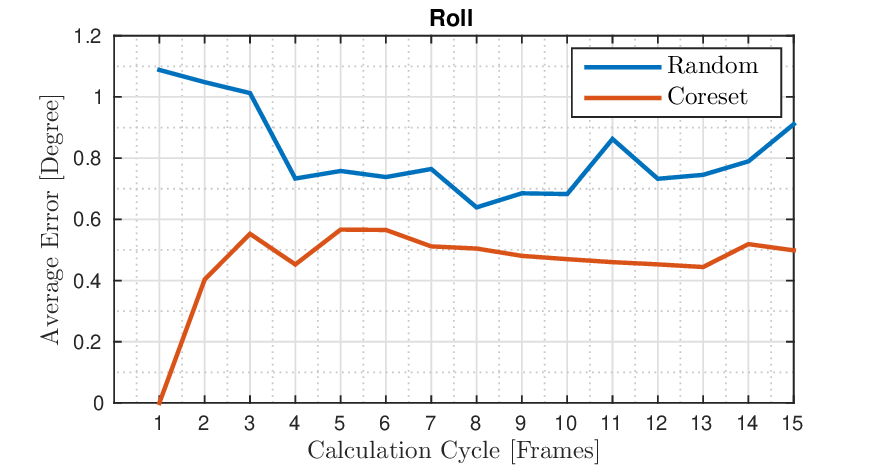}
\end{subfigure}
\begin{subfigure}[h]{0.49\textwidth}
\centering
\includegraphics[scale=0.6]{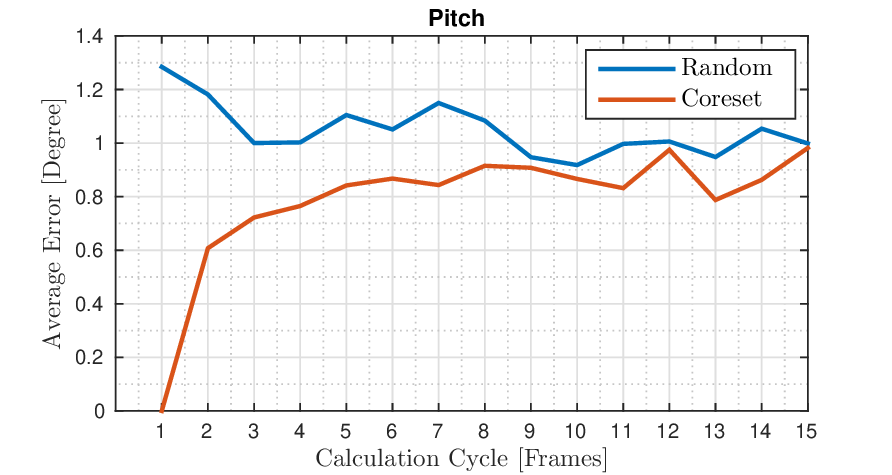}
\end{subfigure}
\begin{subfigure}[h]{0.49\textwidth}
\centering
\includegraphics[scale=0.6]{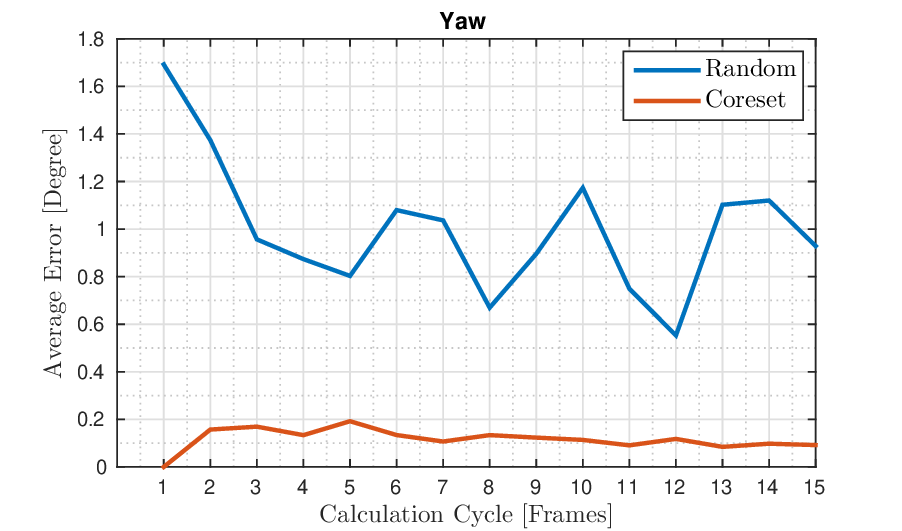}
\end{subfigure}
\caption{\small RGB Tracking test: For every calculation cycle (X-axis), we compare between the core-set average error and the uniform random sampling average error. The Y-axis shows the whole test ($3000$ frames) average error for each calculation cycle.\label{figSG}}
\end{figure}

\section{Conclusion}
We demonstrated how coresets that are usually used for solving problems in machine learning or computational geometry, can also turn theorems into real-time systems. We suggested new coresets of constant size for for kinematic data points in 3-dimensional space. This enabled us to compute the Kabsch algorithm in real-time on slow devices by running them on the coreset, while getting provably exactly the same results. In the companion video we demonstrate the first low-cost wireless tracking system that use coresets and turn a toy quadcopter into a ``Guardian Angel`` that leads guests to their desired location.

Open problems include extending our coresets for handling outliers, matching between frames, different cost functions and inputs, and multiple rigid bodies. Open-source code of our system and algorithms can be found in~\cite{opencode} . We thank Daniela Rus for suggesting the name "Guardian Angel" for our guiding system.

\bibliographystyle{SageH}
\bibliography{references}

\end{document}